\newif\ifIcassp

\Icassptrue

\ifIcassp
    \documentclass{article}
    \usepackage{spconf}
\else 
    \documentclass[journal,12pt,onecolumn,draftclsnofoot]{IEEEtran}
\fi 

\usepackage{comment}

\usepackage{algorithm}
\usepackage{algorithmic}
\usepackage{graphicx}
\usepackage{url}
\usepackage{breqn}
\usepackage{cite}
\usepackage[usenames]{color}
\usepackage{amsfonts}
\usepackage{fancyhdr}
\usepackage{todonotes}
\usepackage{arydshln}
\usepackage{amsmath,amssymb,amsthm,bm,mathtools}
\usepackage{psfrag, caption,subcaption}
\newtheorem{theorem}{Theorem}
\newtheorem{Proposition}{Proposition}
\newtheorem{lemma}{Lemma}
\newtheorem{Definition}{Definition}

\usepackage{arydshln}
\usepackage{booktabs}
\usepackage{multirow}
\usepackage{adjustbox}
\usepackage{hyperref}
\usepackage{tikz}
\usetikzlibrary{
    arrows.meta, calc, positioning, backgrounds,
    decorations.markings, decorations.pathreplacing
}
\usepackage[most]{tcolorbox}

\usepackage{acronym,comment}

\acrodef{dag}[DAG]{Directed acyclic graph}
\acrodef{iff}[\emph{iff}]{if and only if}
\acrodef{sem}[SEM]{structural equation model}

\newcommand{\R}{\mathbb{R}}

\newcommand{\transp}{{\mathsf T}}

\newcommand{\vecx}[1]{\bm{#1}}
\newcommand{\matx}[1]{\bm{#1}}
\newcommand{\setx}[1]{\mathcal{#1}}

\newcommand{\downset}{\mathord{\downarrow}}
\newcommand{\Atimes}{\vecx A_{\times}}
\newcommand{\Wtimes}{\vecx W_{\times}}

\newtcolorbox{worked}{colback=black!3,colframe=black!55,boxrule=0.5pt,
  left=4pt,right=4pt,top=3pt,bottom=3pt,fonttitle=\bfseries\small,
  title={Example 1: Product DAG Causal Shift Operator}}

\newtcolorbox{worked1}{colback=black!3,colframe=black!55,boxrule=0.5pt,
  left=4pt,right=4pt,top=3pt,bottom=3pt,fonttitle=\bfseries\small,
  title={Example 2: Separable Product DAG Filter}}

\usepackage{hyperref}

\def\cast{{
   \mathord{
      \hbox to 0em{
         \ooalign{
	   \smash{\hbox{$\ast$}}\crcr
	   \smash{\hskip-1pt\Large\hbox{$\circ$}} }
	 \hidewidth}
      \phantom{\bigcirc}
} }}

\def\bm#1{\mbox{\boldmath $#1$}}

\newcommand{\bds}{\begin {itemize}}
\newcommand{\eds}{\end {itemize}}
\newcommand{\bdf}{\begin{definition}}
\newcommand{\blm}{\begin{lemma}}
\newcommand{\edf}{\end{definition}}
\newcommand{\elm}{\end{lemma}}
\newcommand{\bthm}{\begin{theorem}}
\newcommand{\ethm}{\end{theorem}}
\newcommand{\bprp}{\begin{prop}}
\newcommand{\eprp}{\end{prop}}
\newcommand{\bcl}{\begin{claim}}
\newcommand{\ecl}{\end{claim}}
\newcommand{\bcr}{\begin{coro}}
\newcommand{\ecr}{\end{coro}}
\newcommand{\bquest}{\begin{question}}
\newcommand{\equest}{\end{question}}

\newcommand{\larrow}{{\larrow}}

\newcommand{\argmin}{\ensuremath{\mathrm{arg}\min}}
\newcommand{\argmax}{\ensuremath{\mathrm{arg}\max}}

\usepackage{latexsym}

\def\IC{\mathbb C}
\def\IN{\mathbb N}
\def\IZ{\mathbb Z}
\def\IR{\mathbb R}

\def\shat{^{\mathchoice{}{}%
 {\,\,\smash{\hbox{\lower4pt\hbox{$\widehat{\null}$}}}}%
 {\,\smash{\hbox{\lower3pt\hbox{$\hat{\null}$}}}}}}

\def\bSigma{{
      \ooalign{
      \smash{\hskip.4pt\raise.4pt\hbox{$\Sigma$}}\vphantom{}\crcr
      \smash{\hskip.7pt\raise.6pt\hbox{$\Sigma$}}\vphantom{}\crcr
      \smash{\hbox{$\Sigma$}}\vphantom{$\Sigma$}}
      \vphantom{\hbox{$\Sigma$}}
      }}
\def\bTheta{{
      \ooalign{
      \smash{\hskip.5pt\raise.5pt\hbox{$\Theta$}}\vphantom{}\crcr
      \smash{\hskip.0pt\raise.1pt\hbox{$\Theta$}}\vphantom{}\crcr
      \smash{\hbox{$\Theta$}}\vphantom{$\Theta$}}
      \vphantom{\hbox{$\Theta$}}
      }}
\def\bDelta{{
      \ooalign{
      \smash{\hskip.4pt\raise.4pt\hbox{$\Delta$}}\vphantom{}\crcr
      \smash{\hskip.7pt\raise.6pt\hbox{$\Delta$}}\vphantom{}\crcr
      \smash{\hbox{$\Delta$}}\vphantom{$\Delta$}}
      \vphantom{\hbox{$\Delta$}}
      }}
\def\bLambda{{
      \ooalign{
      \smash{\hskip.5pt\raise.5pt\hbox{$\Lambda$}}\vphantom{}\crcr
      \smash{\hskip.0pt\raise.1pt\hbox{$\Lambda$}}\vphantom{}\crcr
      \smash{\hbox{$\Lambda$}}\vphantom{$\Lambda$}}
      \vphantom{\hbox{$\Lambda$}}
      }}

\makeatletter

\def\bordermatrix#1{\begingroup \m@th
  \@tempdima 8.75\p@
  \setbox\z@\vbox{%
    \def\cr{\crcr\noalign{\kern2\p@\global\let\cr\endline}}%
    \ialign{$##$\hfil\kern2\p@\kern\@tempdima&\thinspace\hfil$##$\hfil
      &&\quad\hfil$##$\hfil\crcr
      \omit\strut\hfil\crcr\noalign{\kern-\baselineskip}%
      #1\crcr\omit\strut\cr}}%
  \setbox\tw@\vbox{\unvcopy\z@\global\setbox\@ne\lastbox}%
  \setbox\tw@\hbox{\unhbox\@ne\unskip\global\setbox\@ne\lastbox}%
  \setbox\tw@\hbox{$\kern\wd\@ne\kern-\@tempdima\left[\kern-\wd\@ne
    \global\setbox\@ne\vbox{\box\@ne\kern2\p@}%
    \vcenter{\kern-\ht\@ne\unvbox\z@\kern-\baselineskip}\,\right]$}%
  \null\;\vbox{\kern\ht\@ne\box\tw@}\endgroup}
\makeatother

\makeatletter
\def\argmin{\mathop{\operator@font arg\,min}}
\def\argmax{\mathop{\operator@font arg\,max}}
\makeatother

\newcommand{\diag}{\mbox{\rm diag}}

\newcommand{\rank}{\mbox{\rm rank}}

\newcommand{\vect}{\mbox{\rm vec}}

\def\bm#1{\mbox{\boldmath $#1$}}

\newcommand{\bea}{\begin{array}}
\newcommand{\ena}{\end{array}}
\newcommand{\beq}{\begin{equation}}
\newcommand{\enq}{\end{equation}}

\newcommand{\beqa}{\begin{eqnarray}}
\newcommand{\enqa}{\end{eqnarray}}

\newcommand{\beqan}{\begin{eqnarray*}}
\newcommand{\enqan}{\end{eqnarray*}}

\newcommand{\AL}{\begin{enumerate}}
\newcommand{\ALE}{\end{enumerate}}

\def\addots{\mathinner{
    \mkern1mu\raise0pt\vbox{\kern7pt\hbox{.}}
    \mkern2mu\raise4pt\hbox{.}
    \mkern2mu\raise7pt\hbox{.}
    \mkern1mu}}

\def\sddots{\mathinner{
    \mkern.8mu\raise7pt\hbox{.}
    \mkern.8mu\raise4pt\hbox{.}
    \mkern.8mu\raise0pt\vbox{\kern7pt\hbox{.}}
    \mkern1mu}}

\def\saddots{\mathinner{
    \mkern.2mu\raise0pt\vbox{\kern7pt\hbox{.}}
    \mkern.2mu\raise4pt\hbox{.}
    \mkern.2mu\raise7pt\hbox{.}
    \mkern1mu}}

\def\sqplus{\mathbin{
	{\ooalign{\hfil\raise.3ex\hbox{\scriptsize
	+}\hfil\crcr\mathhexbox274\crcr\mathhexbox275}}
	}} 
\def\sqminus{\mathbin{
	{\ooalign{\hfil\raise.3ex\hbox{\scriptsize
	--}\hfil\crcr\mathhexbox274\crcr\mathhexbox275}}
	}}

\def\IC{{
   \mathord{
      \hbox to 0em{
	 \hskip-4pt
         \ooalign{
	   \smash{\hskip1.9pt\raise2.6pt\hbox{$\scriptscriptstyle |$}}\crcr
	   \smash{\hbox{\rm\sf C}} }
	 \hidewidth}
      \phantom{\hbox{\rm\sf C}}
} }}
\def\IN{
    {\ooalign{
   \smash{\hskip2.2pt\raise1.5pt\hbox{$\scriptscriptstyle |$}}\vphantom{}\crcr
   \hbox{\sf N}
	}}
	} 
\def\IZ{
    {\ooalign{
   \smash{\hskip1.9pt\raise0pt\hbox{$\sf Z$}}\vphantom{}\crcr
   \hbox{\sf Z}
	}}
	} 
\def\IR{
    {\ooalign{
   \smash{\hskip2.2pt\raise1.5pt\hbox{$\scriptscriptstyle |$}}\vphantom{}\crcr
   \smash{\hskip2.2pt\raise3.3pt\hbox{$\scriptscriptstyle |$}}\vphantom{}\crcr
   \hbox{\sf R}
	}}
	} 

\DeclareMathAlphabet{\mathcmb}{OT1}{cmr}{b}{n}

\def\bSigma{\ensuremath{\mathcmb{\Sigma}}}
\def\bLambda{\ensuremath{\mathcmb{\Lambda}}}

\def\bTheta{\ensuremath{\mathcmb{\Theta}}}

\newcommand{\SI}{\begin{indlist}}
\newcommand{\EI}{\end{indlist}}

\newcommand{\DL}{\begin{dashlist}}
\newcommand{\DLE}{\end{dashlist}}

\makeatletter
\def\setboxz@h{\setbox\z@\hbox}
\def\wdz@{\wd\z@}
\def\boxz@{\box\z@}
\def\underset#1#2{\binrel@{#2}%
  \binrel@@{\mathop{\kern\z@#2}\limits_{#1}}}
\def\binrel@#1{\begingroup
  \setboxz@h{\thinmuskip0mu
    \medmuskip\m@ne mu\thickmuskip\@ne mu
    \setbox\tw@\hbox{$#1\m@th$}\kern-\wd\tw@
    ${}#1{}\m@th$}%
  \edef\@tempa{\endgroup\let\noexpand\binrel@@
    \ifdim\wdz@<\z@ \mathbin
    \else\ifdim\wdz@>\z@ \mathrel
    \else \relax\fi\fi}%
  \@tempa
}
\let\binrel@@\relax%
\makeatother

\begin{document}

\title{Signal Processing over Product DAG\MakeLowercase{s}: Causal Shifts and Filters}
\ifIcassp
    \ninept
\makeatletter
\g@addto@macro\small{\abovedisplayskip 4pt plus 2pt minus 2pt \belowdisplayskip 4pt plus 2pt minus 2pt \abovedisplayshortskip 0pt plus 2pt \belowdisplayshortskip 2pt plus 2pt}
\makeatother
\normalsize
\setlength{\textfloatsep}{7pt plus 2pt minus 2pt}
\setlength{\floatsep}{6pt plus 2pt minus 2pt}
%

\name{Sundeep Prabhakar Chepuri$^\star$, Antonio G. Marques$^\ddag$, Maulik Devmurari$^\star$, and Gonzalo Mateos$^\dag$  
}
\address{$^\star$Indian Institute of Science, India, 
$^\ddag$King Juan Carlos Univ., Spain, and 
$^\dag$Univ. of Rochester, USA
}
\else
\author{  
\thanks{The authors are with the Department of Electrical Communication Engineering, Indian Institute of Science, Bengaluru, India.}}
\fi 

\maketitle

\begin{abstract}
We develop a signal processing framework for signals indexed by the
\emph{product} of two directed acyclic graphs (DAGs) and described by a linear structural equation model (SEM). Such a setup arises whenever (linear) causal relations act along two domains, as in component versus manufacturing stage or gene versus experimental condition. 
Disregarding the factorization of the underlying graph and the native two-axis causal structure requires inverting a weighted transitive closure matrix whose size is the product of the two factor sizes for Fourier analysis. Recognizing that standard graph products fail to yield factorizable transitive closures, 
we introduce a new DAG product under which separability holds. We motivate the new operator in the vertex domain, and show that it also renders the SEM, the Fourier modes, the causal shifts, and the filters on the product DAG separable across its constituent graph factors. 
Graph signal denoising tests showcase the benefits of the novel product DAG filters.
\end{abstract}

\ifIcassp
\begin{keywords}
Directed acyclic graphs, graph filters, graph signal processing, product graphs, structural equation models
\end{keywords}
\fi 
\section{Introduction} \label{sec:intro} 
Graph signal processing (GSP)~\cite{shuman2013emerging,leus2023graph} deals with data indexed by the nodes of a graph. Rooted in spectral graph theory, most of the advances in this field rely on matrix representations of the graph (typically the adjacency or Laplacian matrices) whose eigendecomposition induces a meaningful notion of frequency. Beyond Fourier analysis, workhorse tools such as filtering, sampling, or spectral estimation can be generalized via polynomials of those so-termed graph-shift operators (GSOs). However, accounting for edge directionality comes with well-documented challenges~\cite{marques2020spmag}. These compound for \emph{directed acyclic graphs (DAGs)}, which are central to a number of timely applications where the relations among variables are causal rather than associative; see e.g.,~\cite{peters2017elements}. Indeed, because the adjacency matrix of a DAG is nilpotent, it is neither diagonalizable nor endowed with a meaningful spectrum, hence there is no well-defined graph Fourier transform; see e.g.,~\cite{mihal2025tsp}. Recently, a  DAG-specific signal processing (SP) framework was put forth in~\cite{seifert2023causal}, where GSOs and convolutional filters are constructed based on the DAG's weighted transitive closure (Section \ref{sec:prelims}). This way, when signals adhere to a linear \ac{sem} it is possible to offer a fresh Fourier interpretation to causal notions~\cite{misiakos2023neurips,misiakos2024icassp}; see also~\cite{stankovic2025dsp} for a different approach based on graph zero-padding.

Most germane to this paper's innovations are signals supported on \emph{product} graphs, which arise with data indexed by two (or more) domains. Prominent examples include recommender systems and spatio-temporal fields, just to name a couple. Since the number of vertices of the product is $N=N_1N_2$, naive processing algorithms face scalability issues. In addition, when model parameters must be estimated or learned, the sample size requirements grow accordingly. Yet, judicious exploitation of (latent) product graph structure can keep these apparent predicaments in check~\cite{sandryhaila2014bigdata}. The eigenvectors of the product shift are the Kronecker product of the factor eigenvectors, so Fourier analysis decouples across domains. This separability principle underpins time-vertex SP~\cite{grassi2018timevertex} for scalable filtering, graph learning, and sample selection on product
graphs~\cite{sandryhaila2013discrete,lodhi2020product,kadambari2021product,ortiz2019sparse}.

This paper builds on the recognition that such elegant GSP generalizations do \emph{not} carry over to product DAGs. Again, the cardinal reason is the collapsing spectrum of the envisioned DAG adjacency matrix factors. Even more, the weighted transitive closure adopted as Fourier basis for DAGs~\cite{seifert2023causal}  cannot be rendered separable across factors for any of the standard (i.e., Cartesian, Kronecker, or strong \cite{sandryhaila2014bigdata}) graph products. To bridge this gap, our main contribution is a new definition of product DAG (Section \ref{sec:productdags}). We first explain its rationale in the vertex domain, elucidating that it amounts to the strong product with a sign-reversed cross term that avoids spurious duplication of information flowing from nodes' ancestors. 
Crucially, we also establish a resolvent factorization (Proposition \ref{prop:resolvent}) which leads to tractable generalizations of the transitive closure, Fourier modes, and GSOs, all of which decompose as Kronecker products of their factor counterparts. In Section \ref{sec:filters}, we discuss separable convolutional filtering on product DAGs. Graph signal denoising experiments over synthetic and real-world data demonstrate the benefits of the proposed product DAG filters (Section \ref{sec:experiments}). We conclude in Section \ref{sec:conclusions} with a summary of our findings and an outlook towards future work. 
\vspace{2pt}

\noindent\textbf{Notation.} We use lowercase (uppercase) boldface letters to denote vectors (matrices) and calligraphic letters to denote sets.  We denote transpose with $(\cdot)^{\transp}$, Kronecker and Hadamard products as $\otimes$ and $\odot$, respectively. $\vecx 1_N$ denotes the all-ones vector of length $N$, $\matx I_N$ denotes the $N \times N$ identity matrix, and $\vecx e_i$ is the $i$th column of $\matx I_N$. The notation $i\preceq_k q$ means $i=q$ or $i$ reaches $q$, for $i,q$ in some set~$\setx V_k$. 

\section{Preliminaries}\label{sec:prelims}

\noindent\textbf{DAGs and \acp{sem}.} Let $\setx G=(\setx V,\setx E,\vecx A)$ be a weighted DAG with $N$ vertices, vertex
set $\setx V$, edge set $\setx E$, and weighted adjacency matrix $\vecx A$, whose $(v,u)$
entry $A_{vu}$ is nonzero \ac{iff} the parent node $u$ influences the child node $v$.
Since $\setx G$ is acyclic, ordering $\setx V$ topologically renders $\vecx A$ strictly lower triangular, and hence nilpotent.
 
Let $\vecx s \in \R^{N}$ be a DAG signal adhering to a linear \ac{sem}, i.e.,
\begin{equation}
\vecx s=\vecx A\vecx s+\vecx c,
\label{eq:sem}
\end{equation}
where $\vecx c \in \R^{N}$ collects independent exogenous \emph{root causes}. Since $\vecx I-\vecx A$ is invertible, we
can equivalently write $\vecx s=\vecx W\vecx c$, where
\begin{equation}
\vecx W:=(\vecx I-\vecx A)^{-1}=\textstyle\sum\limits_{n=0}^{\infty}\vecx A^{n}=\textstyle\sum\limits_{n=0}^{N-1}\vecx A^{n}
\label{eq:Wmat}
\end{equation}
is the (reflexive) \emph{weighted transitive closure} with the series truncating at $n=N-1$ by
nilpotency. Since $[\vecx A^{n}]_{vu}\neq 0$ means that there is at least one $n$-hop directed path from $u$ to $v$ and the entry $W_{vu}$ sums the products of the edge weights over
all directed paths from $u$ to $v$.\vspace{2pt}

\noindent\textbf{Fourier Modes.} Interestingly, the pair of equations
\begin{equation}\label{eq:an_syn}
\vecx c=(\vecx I-\vecx A) \vecx s \quad\text{and}\quad\vecx s=\vecx W\vecx c
\end{equation}
can be interpreted, respectively, as the forward (analysis) and inverse (synthesis) Fourier representations for DAGs~\cite{seifert2023causal}. 
%
Through this lens, the $i$th column of the weighted transitive closure $\vecx w_i=\vecx W\vecx e_i$ 
is a Fourier mode of $\setx G$ (the DAG signal response
to a unit impulse cause at vertex $i$). The signal
$\vecx s=\sum_{i\in\setx V}c_i\vecx w_i$ is then synthesized from the root causes, which
play the role of frequency coefficients.
\vspace{2pt}
\begin{figure}[t]
\centering
\begin{tikzpicture}[>=Stealth,scale=0.85,every node/.style={font=\scriptsize},
   v/.style={circle,draw,fill=blue!8,minimum size=7.2mm,inner sep=0pt}]
  \node[v] (a1) at (0,0)    {$1$};
  \node[v] (a2) at (0,-1.3) {$2$};
  \node[v] (a3) at (0,-2.6) {$3$};
  \draw[->] (a1)--(a2); \draw[->] (a2)--(a3);
  \node at (0,-3.4) {$\setx G_1$};
  \node at (1.5,-1.3) {\large $\times$};
  \node[v] (b1) at (2.4,-1.3) {$1$};
  \node[v] (b2) at (3.7,-1.3) {$2$};
  \draw[->] (b1)--(b2);
  \node at (3.05,-3.4) {$\setx G_2$};
  \node at (4.7,-1.3) {\large $=$};
  \foreach \i in {1,2,3}{\foreach \j in {1,2}{
     \node[v] (p\i\j) at ({5.7+(\j-1)*1.45},{-(\i-1)*1.3}) {$(\i,\j)$};}}
  \draw[->] (p11)--(p21); \draw[->] (p21)--(p31);
  \draw[->] (p12)--(p22); \draw[->] (p22)--(p32);
  \draw[->] (p11)--(p12); \draw[->] (p21)--(p22); \draw[->] (p31)--(p32);
  \draw[->,dashed,gray] (p11)--(p22); \draw[->,dashed,gray] (p21)--(p32);
  \node at (6.4,-3.4) {$\setx G_\times$};
\end{tikzpicture}
\caption{\footnotesize A toy example of two factor DAGs $\setx G_1$ and $\setx G_2$ and their product $\setx G_\times$. The solid and dashed edges are, respectively, due to the first two terms and the last term of \eqref{eq:Atimes}.}
\label{fig:productgraph}
\end{figure}

\noindent\textbf{Causal Shift Operator.} The \emph{causal} shift indexed by $q\in \setx V$ is
built from the indicator $[\vecx d_q]_i:=\mathbb{I}\{i\preceq q\}$ of the principal
downset 
$\downset q:=\{i:i\preceq q\}$, i.e., $\vecx d_q\in\{0,1\}^{N}$
selects a set of frequencies, or root causes $\setx Q= \{i:[\vecx d_q]_i=1\}$. With $\vecx D_q:=\diag(\vecx d_q)$, we define the causal GSO
$\vecx T_q \in \R^{N \times N}$ on $\setx G$~\cite{seifert2023causal} as
\begin{equation}
\vecx T_q:=\vecx W\,\vecx D_q\,\vecx W^{-1}=\vecx W\,\vecx D_q\,(\vecx I-\vecx A).
\label{eq:posetshift}
\end{equation}
Since $\vecx T_q\vecx s=\vecx W\,\vecx D_q\,\vecx c$, the shift retains exactly the part of $\vecx s$ explained by the causes at or upstream of $q$. Causal shifts are poset analogues of time delays, where shifting ``to $q$'' accumulates all the causal contributions in $\downset q$.
 
Two remarks are in order. First, unlike $\vecx A$, the shift $\vecx T_q$ is {diagonalized by the Fourier basis}, as $\vecx W^{-1}\vecx T_q\vecx W=\vecx D_q$. Second, because $\vecx D_q$ is binary, $\vecx T_q$ is an \emph{idempotent} operator
\begin{equation}
\vecx T_q^2=\vecx W\vecx D_q^2\vecx W^{-1}=\vecx W\vecx D_q\vecx W^{-1}=\vecx T_q
\end{equation}
projecting onto $\operatorname{span}\{\vecx w_i:i\in \setx Q\}$ along the complementary modes. A single shift retains one band, whereas GSO combinations merge
several.\vspace{2pt}

\noindent\textbf{DAG Filters.} Given a \emph{frequency response}
$\tilde{\vecx h}=[\tilde{h}_1,\dots,\tilde{h}_N]^{\transp}\in\R^{N}$, we define the DAG filter~\cite{seifert2023causal} as the linear combination of shifts
\begin{equation}
\vecx H=\sum_{q\in \setx V}\!\!\theta_q\vecx T_q \!=\! \vecx W\diag(\tilde{\vecx h})\vecx W^{-1} \!= \!\vecx W \!\Big(\sum_{q\in \setx V}\!\theta_q\vecx D_q\!\Big)\!\vecx W^{-1}\!,
\label{eq:filtershift}
\end{equation}
where the filter coefficients $\theta_q$ satisfy $\tilde{\vecx h} = \sum_{q \in \setx V} \theta_q \vecx d_q =  \vecx Z \vecx \theta$. Here, $\vecx Z\!\in\!\{0,\!1\!\}^{N \!\times \!N}$ has entries
$Z_{iq}\!=\!\mathbb{I}\{i\!\preceq \!q\} = {[\vecx d_q]}_i$ and it is  triangular with unit diagonal, 
so it can be inverted to compute $\vecx \theta$. We say that a passband $\setx K \subseteq \setx V$ is \emph{admissible} \ac{iff} it is a downset, i.e., if $q \in \setx K$ and $i \preceq q$, then $i \in \setx K$. 

\section{Product DAGs} \label{sec:productdags}

Let $\setx G_1=(\setx V_1,\setx E_1,\vecx A_1)$ and $\setx G_2=(\setx V_2,\setx E_2,\vecx A_2)$ be DAGs with $N_1$ and $N_2$ vertices, respectively. 
A product DAG is the graph $\setx G_\times=(\setx V_\times,\setx E_\times,\vecx A_\times)$, where $\setx V_\times = \setx V_1 \times \setx V_2$. The entry $[\vecx A_\times]_{(i',j'),(i,j)}\neq 0$ exactly when $(i,j)$ is a parent of $(i',j')$ in the product graph.

The edge set $\setx E_\times$ and the resulting product DAG adjacency are determined by the choice of the product operator, say, the Kronecker product~\cite{sandryhaila2014bigdata}. Irrespective of that choice, $\vecx A_\times$ will be strictly lower triangular (modulo a topological sort) and hence nilpotent. In what follows, we propose a new DAG product operator that allows us to express the Fourier modes (respectively, the GSO) on the product DAG as the Kronecker product of the Fourier modes (causal shifts) of its DAG factors. Crucially, such factorizations are impossible using standard (e.g., Cartesian, Kronecker, or strong) graph products.

\begin{Definition}[Product DAG]\label{def:product}
The product of $\setx G_1$ and $\setx G_2$ is the DAG
$\setx G_\times=(\setx V_1\!\times\!\setx V_2,\setx E_\times,\Atimes)$ with adjacency matrix 
\begin{equation}
{\;\vecx A_\times :=\vecx A_1\otimes\vecx I_2+\vecx I_1\otimes\vecx A_2-\vecx A_1\otimes\vecx A_2\;}
\label{eq:Atimes}
\end{equation}
where $\vecx I_1\in\R^{N_1\times N_1}$ and $\vecx I_2\in\R^{N_2\times N_2}$ are the factor
identities.
\end{Definition}
Since $\vecx A_1$ and $\vecx A_2$ are strictly lower triangular, so is $\Atimes$, and $\setx G_\times$ is a DAG. Structurally, $\setx E_\times$ coincides with the edge set of the \emph{strong} product~\cite{sandryhaila2014bigdata}: vertex $(i',j')$ has parents of the form $(i,j')$, $(i',j)$, and $(i,j)$, where $i$ is a parent of $i'$ in $\setx G_1$ and $j$ is a parent of $j'$ in $\setx G_2$; see Fig.~\ref{fig:productgraph}.

What is new is the \emph{polarity} of the diagonal edges, which carry $-\vecx A_1\otimes\vecx A_2$ rather than $+\vecx A_1\otimes\vecx A_2$. As we show next, this sign flip is precisely what makes the resolvent, and hence the Fourier modes and causal shifts, factorize, as no standard graph product does. 

\begin{Proposition}[Resolvent factorization]\label{prop:resolvent}
With $\vecx A_\times$ as in \eqref{eq:Atimes}, it holds that
$\vecx I-\vecx A_\times=(\vecx I_1-\vecx A_1)\otimes(\vecx I_2-\vecx A_2)$. Hence
\begin{equation}
\vecx W_\times\!:=\!(\vecx I-\vecx A_\times)^{-1}\!=\vecx W_1\otimes\vecx W_2.
\label{eq:Wtimes}
\end{equation}
\end{Proposition}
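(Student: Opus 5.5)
The plan is to expand the right-hand side with the bilinearity of the Kronecker product and compare it term by term with $\vecx I-\vecx A_\times$ as given by \eqref{eq:Atimes}. Then we invert, using the mixed-product property.

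\textbf{Step 1 (expansion).} By bilinearity,
\begin{equation*}
(\vecx I_1-\vecx A_1)\otimes(\vecx I_2-\vecx A_2)=\vecx I_1\otimes\vecx I_2-\vecx A_1\otimes\vecx I_2-\vecx I_1\otimes\vecx A_2+\vecx A_1\otimes\vecx A_2 .
\end{equation*}
Since $\vecx I_1\otimes\vecx I_2=\vecx I$, the identity on $\R^{N_1N_2}$, the right-hand side equals $\vecx I-(\vecx A_1\otimes\vecx I_2+\vecx I_1\otimes\vecx A_2-\vecx A_1\otimes\vecx A_2)=\vecx I-\vecx A_\times$ by Definition~\ref{def:product}. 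This is exactly where the negative sign of the cross term is needed. With the strong product, the term $+\vecx A_1\otimes\vecx A_2$ would enter $\vecx I-\vecx A_\times$ with the wrong sign, and the identity would fail.

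\textbf{Step 2 (inversion).} Each $\vecx I_k-\vecx A_k$ is invertible, since $\vecx A_k$ is strictly lower triangular under a topological order and so $\vecx I_k-\vecx A_k$ is unit lower triangular. The Kronecker product of invertible matrices is invertible. By the mixed-product rule $(\vecx X\otimes\vecx Y)(\vecx U\otimes\vecx V)=\vecx X\vecx U\otimes\vecx Y\vecx V$, we get
\begin{equation*}
\bigl[(\vecx I_1-\vecx A_1)\otimes(\vecx I_2-\vecx A_2)\bigr](\vecx W_1\otimes\vecx W_2)=\vecx I_1\otimes\vecx I_2=\vecx I,
\end{equation*}
so $\vecx W_\times=(\vecx I-\vecx A_\times)^{-1}=\vecx W_1\otimes\vecx W_2$, with $\vecx W_k$ as in \eqref{eq:Wmat}.

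\textbf{Remaining bookkeeping.} There is no real obstacle in this argument. The only point needing care is the indexing convention: the product vertex $(i,j)$ must be ordered lexicographically with the $\setx V_1$ index as the outer one, to match $\otimes$. Under that ordering, a topological sort of each factor yields one for $\setx G_\times$, and the Kronecker expressions above are consistent with how $\vecx A_\times$ was defined.
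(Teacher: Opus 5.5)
Your proposal is correct and follows essentially the same route as the paper: expand $(\vecx I_1-\vecx A_1)\otimes(\vecx I_2-\vecx A_2)$ bilinearly to recover $\vecx I-\vecx A_\times$, then invert factorwise. The only difference is that you verify the inverse explicitly via the mixed-product rule, whereas the paper cites the Kronecker inverse identity directly.
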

\begin{proof}
Expanding the Kronecker product, we have $
(\vecx I_1-\vecx A_1)\otimes(\vecx I_2-\vecx A_2) = \vecx I_1\otimes\vecx I_2-\vecx A_1\otimes\vecx I_2-\vecx I_1\otimes\vecx A_2+\vecx A_1\otimes\vecx A_2 =\vecx I-\vecx A_\times$. 
Since each factor DAG is acyclic, $\vecx A_k$ is nilpotent and $\vecx I_k-\vecx A_k$ is
invertible, for $k\in\{1,2\}$. The Kronecker inverse identity
then gives
$\vecx W_\times=(\vecx I_1-\vecx A_1)^{-1}\otimes(\vecx I_2-\vecx A_2)^{-1}=\vecx W_1\otimes\vecx W_2$.
\end{proof}
There is more to Definition \ref{def:product} than the mathematical convenience of a factorizable resolvent. To further motivate the sign flip in \eqref{eq:Atimes}, let $i$ be a parent of $i'$ in $\setx G_1$ and $j$ a parent of $j'$ in $\setx G_2$. The Kronecker sum in \eqref{eq:Atimes} creates two directed paths of length $2$ between $(i,j)$ and $(i',j')$, each of them with a combined weight of $[\vecx A_1]_{i'i}[\vecx A_2]_{j'j}$, so that the information from  $(i,j)$ reaches $(i',j')$ twice. The negative cross term in \eqref{eq:Atimes} subtracts one of them, thus avoiding spurious duplication of information flowing from nodes' ancestors.

\subsection{Fourier Modes of the Product DAG}
Let the signal on the product DAG be a matrix $\vecx S\in\R^{N_1\times N_2}$ whose entry $S_{ij}$
is the nodal value at vertex $(i,j)$. Similarly, let $\vecx C \in \R^{N_1\times N_2}$ be the root cause matrix. Vectorizing them as $\vecx s=\vect(\vecx S^\transp)\in\R^{N}$ and $\vecx c=\vect(\vecx C^\transp)\in\R^{N}$ with $N = N_1N_2$, we arrive at
a linear \ac{sem} on the product DAG $\setx G_\times$, namely
\begin{equation}
    \vecx s=\Atimes\vecx s+\vecx c \quad \text{or, equivalently, } \quad \vecx s =\Wtimes \vecx c.
    \label{eq:prod-sem}
\end{equation}
From Proposition~\ref{prop:resolvent} and using $\vect(\vecx W_2\vecx C^\transp\vecx W_1^{\transp})=(\vecx W_1\otimes\vecx W_2)\vect(\vecx C^\transp)$, we have [cf. \eqref{eq:an_syn}]
\begin{equation}
\vecx S =\vecx W_1\,\vecx C\,\vecx W_2^{\transp} \quad\text{ and } \quad \vecx C=(\vecx I-\vecx A_1)\,\vecx S\,(\vecx I-\vecx A_2)^{\transp},
\end{equation}
where the signal is synthesized from the causes, and the causes are analyzed from the signal, through the factor transitive closures $\vecx W_1$ and $\vecx W_2$.

\subsection{Product DAG Causal Shift Operator} \label{sec:prodshift}

We now extend the causal GSO notion discussed in Section~\ref{sec:prelims} to product DAGs, wherein we leverage the favorable structure of $\Wtimes$ in \eqref{eq:Wtimes}. Since $\setx E_\times$ is the edge set of the strong product, reachability in $\setx G_\times$ is componentwise, i.e., $(i_1,i_2)\preceq(q_1,q_2)$ \ac{iff} $i_1\preceq_1 q_1$ and $i_2\preceq_2 q_2$. Hence the principal downset of a product vertex $q=(q_1,q_2)$ factorizes as $\downset(q_1,q_2)=\downset q_1\times\downset q_2$. Note that this is a purely structural property shared with the strong product, and the reason why the \emph{support} factorizes. The negative sign in \eqref{eq:Atimes} is what makes the \emph{weights} factorize. Thus, the causal shift indexed by $q  \in \setx V_\times$ is determined by the indicator 
$[\vecx d_{\times,q}]_{i_1i_2} = \mathbb{I}\{i_1\preceq_1 q_1,i_2\preceq_2 q_2\} = {[\vecx d_{q_1}^{(1)}]}_{i_1}{[\vecx d_{q_2}^{(2)}]}_{i_2}$. That is, $\vecx d_{\times,q} = \vecx d_{q_1}^{(1)} \otimes \vecx d_{q_2}^{(2)}$ and a product shift selects a set of product frequencies through a binary diagonal matrix $\vecx D_{\times,q} = \diag(\vecx d_{q_1}^{(1)} \otimes \vecx d_{q_2}^{(2)}) = \vecx D_{q_1}^{(1)} \otimes  \vecx D_{q_2}^{(2)} \in \{0,1\}^{N \times N}$, leading to the product DAG causal GSO [cf. \eqref{eq:posetshift}]
\begin{equation}
\begin{aligned}
    \vecx T_{\times,q}&=\vecx W_\times\vecx D_{\times,q}\vecx W_\times^{-1} 
    = \bigl(\vecx W_1\vecx D^{(1)}_{q_1}\vecx W_1^{-1}\bigr)\otimes \bigl(\vecx W_2\vecx D^{(2)}_{q_2}\vecx W_2^{-1}\bigr) \notag\\
&=\vecx T^{(1)}_{q_1} \otimes \vecx T^{(2)}_{q_2}.
    \label{eq:prodshift}
\end{aligned}
\end{equation}
In matrix form, the shifted signal $\vecx T_{\times,q}\vecx s$ reads $\vecx T^{(1)}_{q_1}\,\vecx S\,\bigl(\vecx T^{(2)}_{q_2}\bigr)^{\transp}
=\vecx W_1 \bigl(\vecx D^{(1)}_{q_1}\,\vecx C\,\vecx D^{(2)}_{q_2}\bigr)\vecx W_2^\transp$, which keeps the causes at or upstream of $q_1$ from $\setx G_1$ \emph{and} causes at or upstream of $q_2$ from $\setx G_2$, discarding the rest. For didactic purposes, we give a simple numerical example of the product DAG GSO for $\setx G_\times$ from Fig.~\ref{fig:productgraph}. 
\begin{worked}
\small
Consider the product of two chains $1\!\to\!2\!\to\!3$ and $1\!\to\!2$ in Fig.~\ref{fig:productgraph}, with weights $\frac{1}{2}$ on all the edges of the factor graphs. The transitive closure of the product is $\vecx W_\times=\vecx W_1\otimes\vecx W_2$ with
\[\setlength\arraycolsep{2.5pt}
\vecx W_1=\begin{bmatrix}1&0&0\\[-1pt] \frac12&1&0\\[0pt] \frac14&\frac12&1\end{bmatrix} \qquad \text{and} \qquad
\vecx W_2=\begin{bmatrix}1&0\\[-1pt] \frac12&1\end{bmatrix}.\]

For $q=(2,1)$, $\downset q=\{1,2\}\times\{1\}$ and
$\vecx T_{\times, (2,1)}=\vecx T^{(1)}_{2}\otimes\vecx T^{(2)}_{1}$ with
\[\setlength\arraycolsep{2.5pt}
\vecx T^{(1)}_{2}=\begin{bmatrix}1&0&0\\[-1pt] 0&1&0\\[-0pt] 0&\frac12&0\end{bmatrix} \qquad \text{and} \qquad
\vecx T^{(2)}_{1}=\begin{bmatrix}1&0\\[-0pt] \frac{1}{2}&0\end{bmatrix}.\]
One can readily verify that $\vecx T_{\times,(2,1)}$ keeps the root causes on $\downset q$. 
\end{worked}

\section{Product DAG Filters}\label{sec:filters}

In this section, we define product DAG filters and then focus on a specialized regime, namely, separable filters that act on each factor DAG independently. The DAG filters reviewed in Section~\ref{sec:prelims} can be extended to product DAGs by replacing the causal shift with the Kronecker causal GSOs from the factor DAGs. Let $\tilde{\vecx h} \in \R^N$ be the given frequency response, with entries $\tilde{h}_i, i \in \setx V_\times$ and $N = N_1N_2$. A product DAG filter is defined as a combination of the Kronecker causal shifts [cf. \eqref{eq:filtershift}]
\begin{equation*}
\vecx H_\times=\sum_{q_1\in \setx V_1} \sum_{q_2\in \setx V_2}\theta_{q_1q_2}
\vecx T^{(1)}_{q_1}\otimes \vecx T^{(2)}_{q_2}
=\Wtimes\diag(\tilde{\vecx h})\Wtimes^{-1}.
\end{equation*}
The filter coefficients $\theta_{q_1q_2}$ satisfy $\tilde{\vecx h} = \vect(\tilde{\vecx H}^\transp) = (\vecx Z_1 \otimes \vecx Z_2)\vect(\vecx \Theta^\transp)$, with ${[\vecx Z_k]}_{iq} = \mathbb{I}\{i \preceq_k q\} = {[\vecx d^{(k)}_{q}]}_i$ for $i,q \in \{1,\cdots, N_k\}$ and $k \in \{1,2\}$, where ${[\vecx \Theta]}_{q_1q_2} = \theta_{q_1q_2}$. Equivalently,
\[\tilde{\vecx H} = \vecx Z_1 \vecx \Theta \vecx Z_2^\transp \, \Leftrightarrow \,  \vecx \Theta = \vecx Z_1^{-1}\tilde{\vecx H} \vecx Z_2^{-\transp}.\] The filter $\vecx H_\times$ acts on a product DAG signal as $\hat{\vecx s} = \vecx H_\times{\vecx s}$ or, in matrix form, as
\begin{equation}
    \hat{\vecx S} = \vecx W_1(\tilde{\vecx H} \odot \vecx C) \vecx W_2^\transp,
    \label{eq:proddag_filter}
\end{equation}
where we recall $\vecx C=(\vecx I-\vecx A_1)\,\vecx S\,(\vecx I-\vecx A_2)^{\transp}.$ Since $\vecx Z_1$ and $\vecx Z_2$ are triangular with unit diagonal, any desired response $\tilde{\vecx H}$ is realized exactly.\vspace{2pt}

\noindent\textbf{Separable Product DAG Filters.} In the product DAG space, we say that a passband $\setx K \subseteq \setx V_1 \times \setx V_2$ is admissible \ac{iff} it is a downset of the product poset, i.e., if $(q_1,q_2) \in \setx K$, $i_1 \preceq_1 q_1$ and $i_2 \preceq_2 q_2$, then $(i_1,i_2) \in \setx K.$ Admissibility alone does not imply separability. A downset of the product poset need not factor as a product of downsets; see e.g., $\setx K = \{(1,1),(2,1),(1,2)\}$ for the product in Fig.~\ref{fig:productgraph}. Separability corresponds to the \emph{rectangular} subclass $\setx K = \setx K_1 \times \setx K_2$, where $\setx K_k$ is an admissible passband on $\setx G_k$, $k\in\{1,2\}$,  which is always admissible in the product. When the filter is built from such a passband, it has a separable structure with ${\vecx H}_\times = {\vecx H}_1 \otimes {\vecx H}_2$, where ${\vecx H}_k$ is the filter applied along factor $\setx G_k$. In this case, the filter coefficients factor across the two DAGs as $\vecx\Theta=\vecx\theta_1\vecx\theta_2^\transp$, hence $\rank(\vecx\Theta)=1$. The filter response decomposes as
\begin{equation}\label{eq:sep_response}
\tilde{\vecx H}=(\vecx Z_1\vecx\theta_1)(\vecx Z_2\vecx\theta_2)^\transp.
\end{equation}
Once more, to ground these concepts we offer a simple numerical illustration of a separable product DAG filter for $\setx G_\times$ in Fig.~\ref{fig:productgraph}. 
\begin{worked1}
\small
Let us revisit the product of two chains $1\!\to\!2\!\to\!3$ and $1\!\to\!2$ in Fig.~\ref{fig:productgraph}. The low-frequency bands $\setx K_1 = \{1,2\}$ and $\setx K_2 = \{1\}$ are admissible passbands on $\setx G_1$ and $\setx G_2$, respectively, and $\setx K = \setx K_1 \times \setx K_2 = \{(1,1), (2,1)\}$ is a rectangular admissible passband with outer-product response
\begin{equation*}
\tilde{\vecx H} =\vecx 1_{\setx K_1}\vecx 1_{\setx K_2}^\transp
=\begin{bmatrix}1\\1\\0\end{bmatrix}\!\begin{bmatrix}1&0\end{bmatrix}
=\begin{bmatrix}1&0\\1&0\\0&0\end{bmatrix},\qquad \rank(\tilde{\vecx H})=1.
\end{equation*}
One can readily verify that
\begin{equation*}
\tilde{\vecx H} =\vecx Z_{1} \vecx \Theta \vecx Z_2^\transp = \begin{bmatrix}1&1&1\\0&1&1\\0&0&1\end{bmatrix}\begin{bmatrix}0&0\\[-1pt] 1&0\\[0pt] 0&0\end{bmatrix} \begin{bmatrix}1&1\\0&1\end{bmatrix}^\transp.
\end{equation*}
The single nonzero entry $\Theta_{21}=1$ gives
$\vecx H_\times=\vecx T^{(1)}_{2}\otimes\vecx T^{(2)}_{1}=\vecx T_{\times,(2,1)}$, which realizes the low-pass band $\setx K=\{(1,1),(2,1)\}$ with the single product causal shift of Example 1.
\end{worked1}

\noindent\textbf{Filter Design.} A general target response $\tilde{\vecx H}$ is met exactly by $\vecx \Theta=\vecx Z_1^{-1}\tilde{\vecx H}\vecx Z_2^{-\transp}$, but a separable filter can reproduce it only when $\rank(\tilde{\vecx H})=1$. For general $\tilde{\vecx H}$, we thus fit the closest separable filter. Let $\vecx C=(\vecx I-\vecx A_1)\vecx S(\vecx I-\vecx A_2)^\transp$ be the analysis of the given signal and $\hat{\vecx S}=\vecx W_1(\tilde{\vecx H}\odot\vecx C)\vecx W_2^\transp$ be the target output in \eqref{eq:proddag_filter}. We design the separable filter \eqref{eq:sep_response} by minimizing the squared output deviation
\begin{equation*}
(\vecx\theta_1^\star,\vecx\theta_2^\star)=\operatorname*{arg\,min}_{\vecx\theta_1,\vecx\theta_2}
\big\|\,\hat{\vecx S}-\vecx W_1\big((\vecx Z_1\vecx\theta_1)(\vecx Z_2\vecx\theta_2)^\transp
\odot\vecx C\big)\vecx W_2^\transp\big\|_F^2, 
\end{equation*}
which can be solved using alternating least squares (ALS).

\section[Numerical Experiments]{Numerical Experiments\protect\footnotemark}
\label{sec:experiments}
\footnotetext{Code repository:  {\scriptsize \url{github.com/spchepuri/ProductDAG-denoising}}}

\begin{figure*}[t]
  \centering
  \begin{minipage}[c]{0.47\textwidth}
    \centering
    \includegraphics[width=\linewidth]{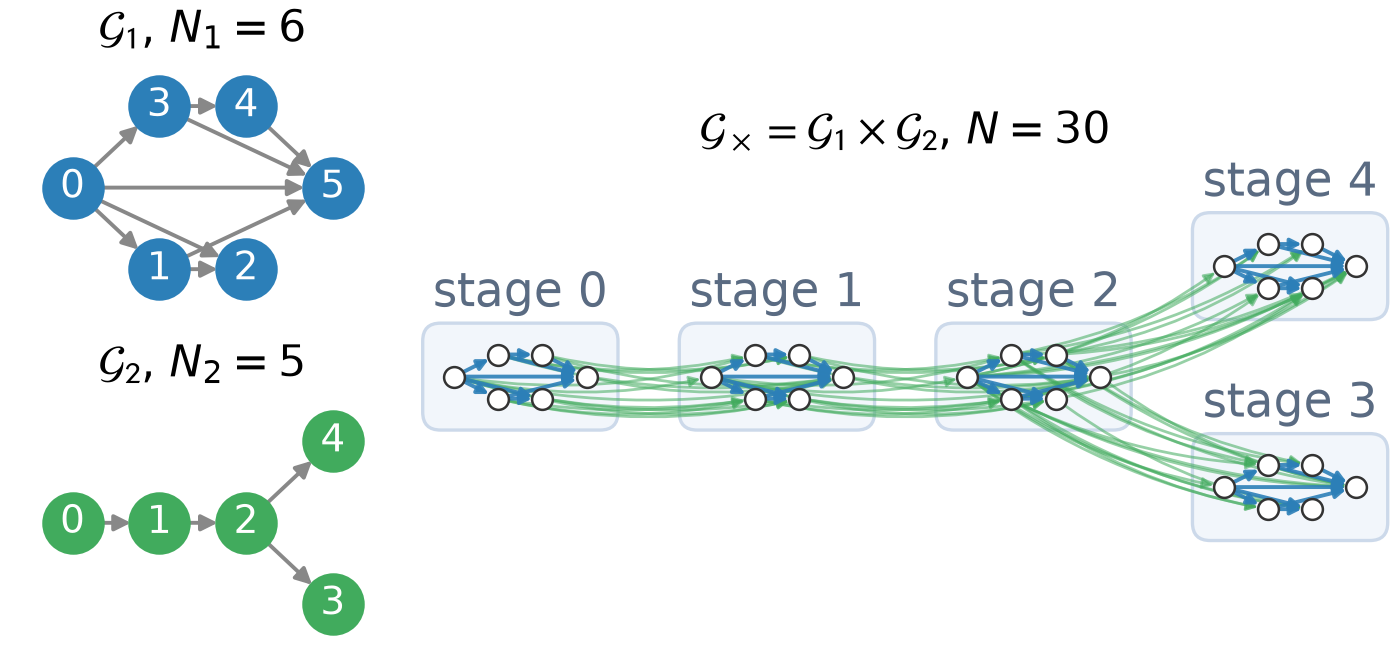}
    \par{\footnotesize (a) Factor DAGs and product graph}
  \end{minipage}~\hfill%
  \begin{minipage}[c]{0.51\textwidth}
    \centering
    \includegraphics[width=\linewidth]{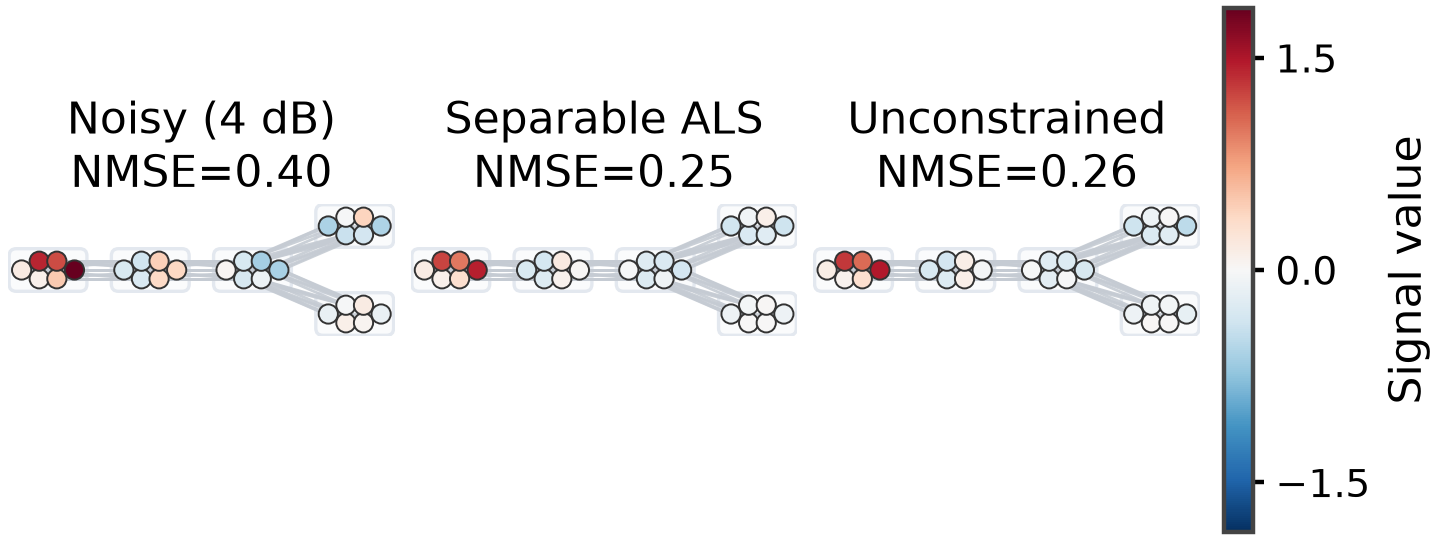}
    \par{\footnotesize (b) Denoising results}
  \end{minipage}
  \caption{\footnotesize Synthetic experiment:
    (a) Factor DAGs and their product;
    (b) Noisy signal, separable ALS denoising,
    and unconstrained denoising.}
  \label{fig:denoising}
  \vspace{-4mm}
\end{figure*}
\begin{figure}[t]
  \centering
  \includegraphics[width=0.75\linewidth]{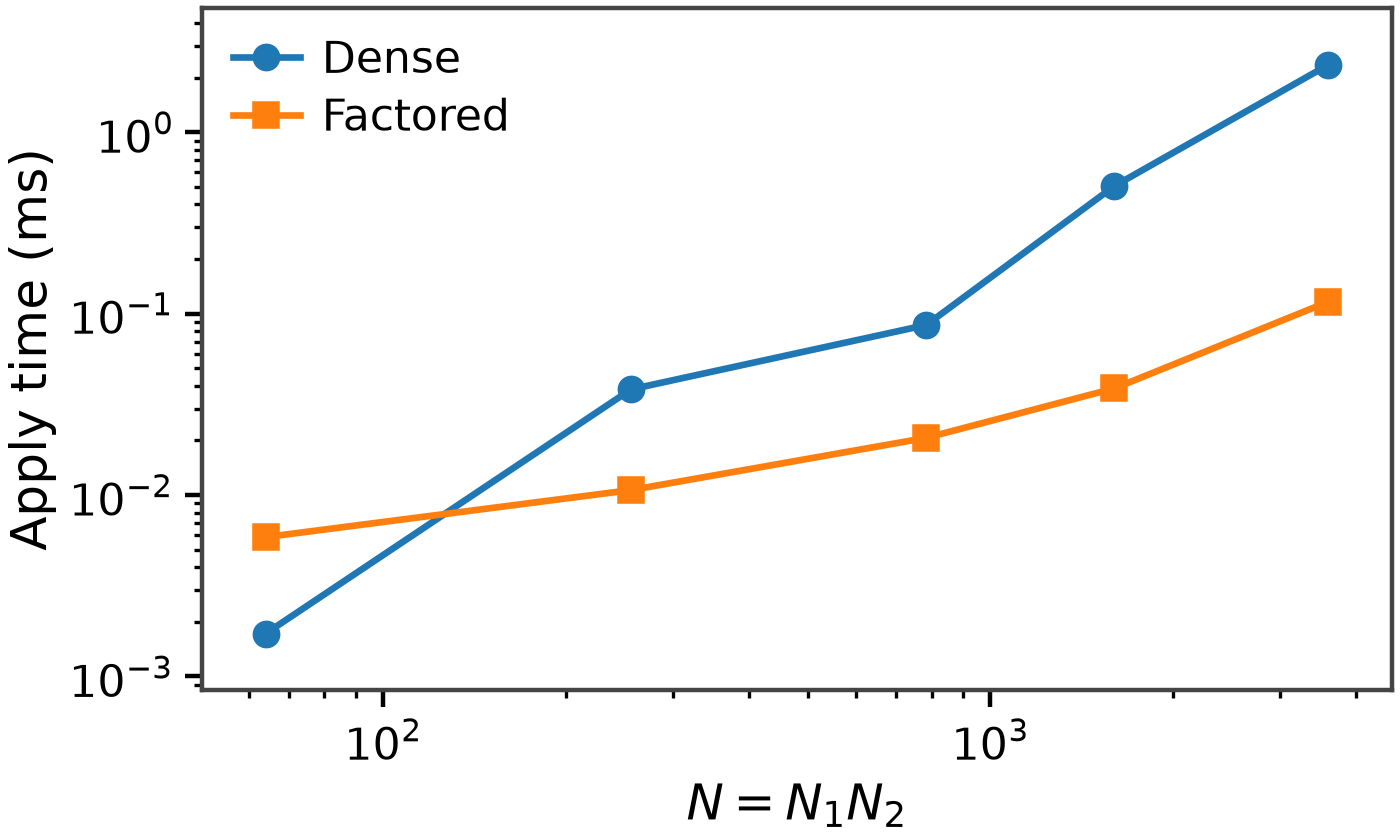}
  \caption{\footnotesize Filter application time for dense and product-factorized implementations versus graph size.}
  \label{fig:runtime}
  \vspace{-2mm}
\end{figure}

\noindent\textbf{Synthetic Dataset.} We evaluate the proposed product DAG filter design approaches on synthetic data generated from two random factor DAGs, $\setx G_1$ with $N_1=6$ nodes and $\setx G_2$ with $N_2=5$ nodes, giving a product DAG $\setx G_\times$ with $N=N_1N_2=30$ nodes; see Fig.~\ref{fig:denoising}(a). Each factor DAG is generated by adding a directed edge from node $u$ to node $v>u$ independently (with probability $0.45$ for $\setx G_1$ and $0.5$ for $\setx G_2$) and a weight drawn uniformly from $[0.3,0.7]$. Any resulting parentless node (except the first one) is connected to a predecessor chosen uniformly at random to guarantee weak connectivity. Clean signals $\matx S$ are generated by drawing independent zero-mean Gaussian coefficients $C_{ij}$ with $\mathbb{E}|C_{ij}|^2
= \rho_1^{\mathrm{depth}_1(i)}\,\rho_2^{\mathrm{depth}_2(j)} + 0.015$, where $\mathrm{depth}_k(i)$ is the longest-path distance from a source node to node $i$ in $\setx G_k$, $\rho_1 = 0.3$, and $\rho_2=0.34$. Thus, coefficient power decreases with graph depth, with a floor of $0.015$. The signal is then synthesized as $\matx S=\matx W_1\matx C\matx W_2^\transp$. Each clean signal is corrupted by white noise with an SNR of 4 dB. An example noisy signal is shown in Fig.~\ref{fig:denoising}(b-left). Filters are fit using 60 training (noisy, clean) pairs and evaluated on 80 held-out pairs. We compare the unconstrained response $\tilde{ \matx H}$ ($N_1N_2=30$ parameters), fitted by least squares, with rank-1 separable response $\tilde{ \matx H}=\vecx h_1\vecx h_2^\transp$ ($N_1+N_2=11$ parameters), fitted using the proposed ALS method. Fig.~\ref{fig:denoising}(b) shows held-out normalized mean-squared error (NMSE) of the two filtering methods relative to the unfiltered baseline. 
As can be seen, both the unconstrained fit (right) and the ALS method (center) are comparable, and significantly better than the unfiltered baseline (left). Fig.~\ref{fig:runtime} shows that the factorized product DAG filter, which avoids forming the dense $N \times N$ operator, is markedly faster than applying its dense equivalent once $N$ exceeds a few hundred nodes.\vspace{2pt}

\noindent\textbf{Real Dataset.} We evaluate the proposed filter on water-quality measurements from the CEH Thames Initiative~\cite{thamesdata}, comprising weekly observations at different sites along the River Thames and its tributaries from 2009 to 2023. We consider 52 weekly \emph{total dissolved phosphorus} observations per year at $N_1=15$ sites over eight years (2010–2017), where 94.1$\%$ of the site-week entries are directly measured and the remaining 5.9$\%$ are filled by linear interpolation within the same calendar year. We evaluate under an 8-fold leave-one-year-out cross-validation (CV) method, i.e., for each fold, one year is held out, and the filter is fit from the remaining seven years, each corrupted with six independently sampled white-noise realizations at a fixed SNR (yielding 42 noisy/clean training pairs). The held-out year is then corrupted with ten independent noise realizations, giving 80 held-out evaluations per method and SNR, with SNR $\in\{4,6,10\}$ dB. Fig.~\ref{fig:thames_results}(a) shows an example of data on the site DAG $\setx G_1$, and we use an $N_2=52$ node directed chain for $\setx G_2$.\vspace{2pt}

We recover the clean spatio-temporal field $\matx S \in \mathbb{R}^{15 \times 52}$ from a noisy observation $\matx Y = \matx S + \matx N$ using $\hat{\matx S}=\matx W_1(\tilde{\matx H}\odot \hat{\matx C})\matx W_2^\transp$, where $\matx N$ is additive white noise at a specified SNR and $\hat{\matx C} = \matx W_1^{-1} \matx Y \matx W_2^{-\transp}$. As before, we compare the unconstrained response $\tilde{\matx H}$ ($N_1N_2=780$ parameters), fitted by least squares, and the rank-1 separable response $\tilde{\matx H}=\vecx h_1\vecx h_2^\transp$ ($N_1+N_2=67$ parameters), fitted by ALS, against the no-filter baseline $\hat{\matx S}=\matx Y$. The table shown in Fig.~\ref{fig:thames_results}(b) reports the held-out NMSE. Both filters substantially outperform the baseline at every SNR value, with the separable filter consistently outperforming the unconstrained filter despite using far fewer parameters. For this denoising task and DAG, these results suggest that rank-1 separability provides an effective inductive bias for better generalization.

\begin{figure}[t]
  \centering
  \begin{minipage}[c]{0.54\linewidth}
    \centering
    \includegraphics[width=\linewidth]{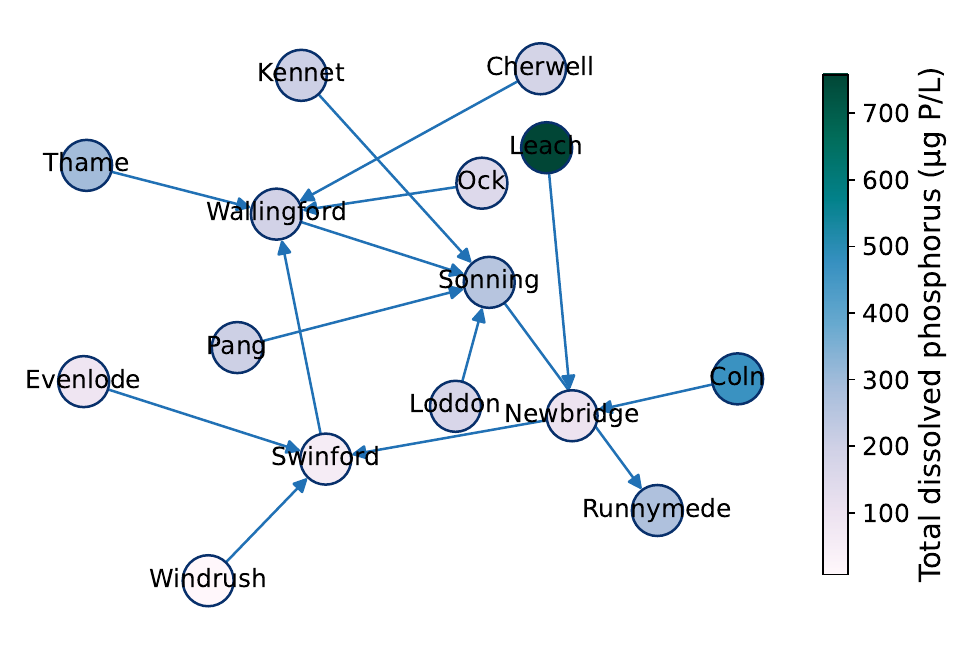}
    \par\smallskip
    {\footnotesize{(a)}}
  \end{minipage}
  \hfill
  \begin{minipage}[c]{0.44\linewidth}
  \centering
  \footnotesize
  \setlength{\tabcolsep}{1pt}
  \renewcommand{\arraystretch}{1.15}
  \begin{tabular}{@{}cccc@{}}
  \toprule
  \shortstack{SNR\\(dB)} &
  \shortstack{No\\filter} &
  ALS & Uncon. \\
  \midrule
  4  & 0.398 & \textbf{0.226} & 0.244 \\
  6  & 0.251 & \textbf{0.166} & 0.177 \\
  10 & 0.100 & \textbf{0.079} & 0.085 \\
  \bottomrule
  \vspace{12pt}
\end{tabular}
  \par\bigskip
  (b)
\end{minipage}
  \caption{\footnotesize River Thames experiment on total dissolved phosphorus data.
  (a) DAG of fifteen river sites. Noisy total dissolved phosphorus data from week 27 of 2017.
  (b) Held-out NMSE vs.\ SNR based on 8-fold leave-one-year-out CV. Bold: best per row. ``Uncon." stands for unconstrained filter design.}
  \label{fig:thames_results}
\end{figure}

\section{Conclusions}\label{sec:conclusions} 
We introduced a DAG product that preserves the two-axis causal structure and enables separable Fourier modes and filtering through resolvent factorization. Denoising experiments demonstrate the benefits of the resulting filters. Future work will explore filters beyond rank one and extend sampling and spectral estimation to product DAGs.

\newpage
\newpage
\section{Acknowledgements}
{\emergencystretch=2em 
Work supported by the KIAC Fintech grant, the US NSF under award ECCS 2231036, the Spanish AEI (AEI/\allowbreak10.13039/\allowbreak501100011033) grants PID2022-136887NB-I00 and PID2025-170000NB-I00, the Community of Madrid via IDEA-CM (TEC-2024/COM-89), and the ELLIS Madrid Unit. Beyond this support, the authors have no relevant financial or non-financial interests to disclose. Claude AI was used to assist in writing the manuscript and coding the simulations. The authors take full responsibility for the results of this paper.\par}

\section{Compliance with Ethical Standards}

This is a numerical simulation study for which no ethical approval was required.

\bibliographystyle{IEEEbib}
\bibliography{references}

\end{document}